\newtheorem{theorem}{Theorem}
\newtheorem{proposition}{Proposition}
\title{From Observation to Orientation: an Adaptive Integer Programming Approach to Intervention Design}
\author{%
  Abdelmonem Elrefaey
    \\
  School of Computing and Augmented Intelligence\\
  Arizona State University\\
  Tempe, AZ 85281 \\
  \texttt{aelrefae@asu.edu} \\
  \And
  Rong Pan
    \\
  School of Computing and Augmented Intelligence\\
  Arizona State University\\
  Tempe, AZ 85281 \\
  \texttt{rpan1@asu.edu} \\
}
\begin{document}

\maketitle

\begin{abstract}
  Using both observational and experimental data, a causal discovery process can identify the causal relationships between variables. A unique adaptive intervention design paradigm is presented in this work, with the objective of learning causal directed acyclic graphs (DAGs) under practical considerations and common assumptions. An iterative integer programming (IP) framework is proposed, which drastically reduces the number of interventions required. Simulations over a broad range of graph topologies are used to assess the effectiveness of the proposed approach. Results show that the proposed adaptive IP approach achieves full causal graph recovery with fewer interventions and variable manipulations than random intervention baselines, while remaining flexible to practical constraints.

\end{abstract}

\section{Introduction}
\label{sec:intro}

Causal discovery aims to unravel the causal relationships among a set of variables by analyzing either observational or experimental data from the system under study \cite{Pearl2000,Spirtes2000}. In many scientific fields — including genomics, epidemiology, and social sciences — the ultimate goal is not merely to identify associations but also to determine which variables \emph{cause} changes in others. For instance, in genomics, researchers seek to understand how specific gene knockouts (a form of perfect intervention) affect downstream gene expression levels \cite{Sachs2005}. To formalize these causal relationships, \emph{directed acyclic graphs} (DAGs) have become a mainstay, in which vertices (nodes) represent random variables, while edges capture directional influences. Learning a causal DAG, often referred to as a \emph{causal Bayesian network} when combined with a probability distribution, typically requires not only observational data but also strategic experimentation or \emph{intervention} \cite{Spirtes2000,Cooper1999,HauserBuhlmann2012}. However, interventions may be costly (e.g., expensive lab experiments) or limited by practical or ethical constraints (e.g., restrictions on human experimentation), prompting researchers to seek methods that minimize the number and complexity of experiments while still guaranteeing that the underlying causal graph can be recovered \cite{TongKoller2001,Murphy2001,Eberhardt2008}.

\paragraph{DAGs and Causal DAGs.}
Mathematically, DAG \(D = (V, E)\) consists of a finite set of vertices \(V = \{1, 2, \ldots, N\}\) together with a set of directed edges \(E\). Each vertex represents a random variable, and a directional edge from vertex \(i\) to vertex \(j\) indicates that variable \(i\) has a direct influence on variable \(j\). Importantly, \(D\) contains no directed cycles. More formally, if there is a path \(i \to j \to \cdots \to k\) in \(D\), then there must not be any path from \(k\) back to \(i\). A DAG is often used to represent statistical dependencies in a factorized form:
\[
P(x_1, \ldots, x_N) \;=\; \prod_{i=1}^N P\bigl(x_i \,\big\vert\, \mathrm{pa}(i)\bigr),
\]
where \(\mathrm{pa}(i)\) denotes the set of parents (direct predecessors) of node \(i\). This factorization implies that each variable is independent of its non-descendants given its parents. When these edges are further interpreted as \emph{causal} relationships, and the probability distribution on \(V\) is assumed to satisfy the \emph{Causal Markov} and \emph{Faithfulness} assumptions, the DAG becomes a \emph{causal DAG} or a \emph{causal Bayesian network} \cite{Spirtes2000,Pearl2000}. Concretely, an edge \(i \to j\) in a causal DAG indicates that manipulating the value of variable \(i\) can alter the distribution of \(j\), all else being equal.

\paragraph{Markov Equivalence and Essential Graphs.}
When learning a causal graph from purely observational data (i.e., without interventions), a DAG is only identifiable \emph{up to a Markov equivalence class} (MEC) \cite{VermaPearl1990,Andersson1997}. This means that multiple DAGs can encode the same set of conditional independence relationships, thus it is impossible to distinguish between them based on observational data alone. Each equivalence class can be compactly represented by a \emph{completed partially directed acyclic graph} (\emph{CPDAG}) or \emph{essential graph}, where any edge directed in the CPDAG is oriented the same way in every member of that class, and undirected edges represent edges whose orientation remains ambiguous.

\paragraph{Types of Interventions.} Interventions in a causal system exogenously set variable values, breaking dependencies \cite{Pearl2000} to resolve causal ambiguities \cite{Spirtes2000,HeGeng2008}. Key types include:
\begin{itemize}
\item \textbf{Single-variable vs. Multi-variable:} Traditional single-variable interventions contrast with multi-variable approaches that accelerate discovery \cite{HauserBuhlmann2012,Eberhardt2007} and are necessary for capturing interaction effects \cite{BoxHunter2005}.
\item \textbf{Perfect (hard) vs. Imperfect (soft):} Perfect interventions fix values, removing parent influences (e.g., gene knockout). Imperfect interventions modify distributions while retaining some parental influence (e.g., gene knockdown) \cite{Korb2004,EatonMurphy2007}.
\end{itemize}

\paragraph{Strategies for Intervention Design.}
Conventionally, experimental design in statistics has focused on how best to assign treatment levels \emph{within} a predetermined set of experimental factors \cite{Fisher1935,BoxHunter2005}. Large-scale causal discovery poses an additional challenge: one must also decide \emph{which} variables to intervene on, and that choice typically depends on the partial knowledge that has accumulated about the causal graph \cite{TongKoller2001,Cooper1999,Eberhardt2008}. Because the number of possible intervention subsets grows exponentially with the number of variables, this decision problem quickly becomes computationally demanding.
Two broad approaches are employed. In a \textbf{fixed} or \textbf{passive} design, the entire sequence of experiments is planned in advance and executed exactly as specified, without regard to any information revealed along the way \cite{HauserBuhlmann2012,HeGeng2008,elrefaey2024causal}. While conceptually simple, a fixed schedule can be highly inefficient for large systems because it ignores opportunities to redirect effort toward the remaining uncertainties in the graph. By contrast, an \textbf{adaptive} or \textbf{active} design selects each new intervention only after the data from previous rounds have been analysed \cite{TongKoller2001,Murphy2001}. This feedback loop allows the experimenter to focus on the edges and orientations that are still ambiguous, and in many practical settings it dramatically reduces the total number of experiments required to recover the full causal structure.

\paragraph{Our Contributions.}
In this paper, we present an \emph{iterative method} for intervention design inline with adaptive intervention design for causal discovery. In this paper, we assume \emph{hard} interventions, where each intervened variable is forcibly set, independently of its direct causes.
We present a novel integer programming (IP) framework that can:
\begin{enumerate}
    \item \textbf{Select Informative Intervention Variables:} Our IP model determines which subset of variables to intervene on during each round of intervention experiment with the goal of maximizing the knowledge gained about the graph.
    \item \textbf{Integrate Logical Inference via Meek’s Rules:} After collecting interventional data and updating uncertain edges, we apply Meek’s orientation rules \cite{Meek1995} to propagate and resolve additional edges without additional experiments.
    \item \textbf{Integrate Practical Limitations as Constraints} The proposed IP is highly modular and offers great flexibility. It accommodates various practical considerations as constraints and can be easily extended to accommodate a number of scenarios.
\end{enumerate}

The remainder of this paper is organized as follows.
Section~\ref{sec:Related_Work} provides relevant work. Section~\ref{sec:methodology} formally states our problem setup and introduces the \emph{partially known graph} (\textsf{PKG}) representation of a causal DAG. It also details our proposed adaptive IP approach, and integration with Meek’s rules. We then report on our experimental simulations and findings in Section~\ref{sec:Experiments}, before concluding with a discussion of future directions.

\section{Related Work}
\label{sec:Related_Work}

The design of intervention experiments for the discovery of causal structures has gained significant attention in recent years, with numerous approaches proposed to efficiently uncover cause-and-effect relationships. This section reviews key contributions in this area, focusing on the assumptions, methodologies, and limitations of various methods.


A diverse range of approaches have been explored for active learning in causal discovery.
Decision-theoretic frameworks were employed by \cite{meganck2006learning}, incorporating intervention costs into their utility function and considering modular experiments where a single variable is targeted per intervention. They used an adaptive strategy.
\cite{eberhardt2008number} focused on establishing theoretical bounds for identifying causal relations, considering scenarios where any number of variables could be simultaneously intervened on, However, they did not explicitly model intervention costs or provide a specific algorithm for intervention selection.\cite{he2008active} developed algorithms for both fixed and adaptive intervention design, aiming to minimize the number of manipulated variables, thus implicitly limiting intervention set size but without cost or budget considerations. \cite{Eberhardt2008} proposed the OPTINTER algorithm, incorporating limits on intervention set size through a parameter and employing an adaptive strategy. Their paper did not explicitly model intervention costs but aimed to minimize the number of experiments. Using a fixed strategy, \cite{hauser2014two} introduced greedy algorithms, OPTSINGLE, which employs an adaptive strategy and limits interventions to single vertices, and another algorithm, OPTUNB, which allows interventions on multiple variables simultaneously without a specified limit. The focus was on minimizing the number of interventions rather than the cost associated with each intervention.
By drawing parallels to graph-theoretic concepts, \cite{hyttinen2013experiment} employed a fixed design strategy and implicitly addressed intervention costs by considering an objective of minimizing the number of intervenable variables in addition to the number of interventions. They provided algorithms to achieve this, considering worst-case (complete graph) scenarios which have limited practicality. \cite{hu2014randomized} proposed theoretical limits to using randomized adaptive strategies and argued that they improve upon existing deterministic methods, but they did not incorporate limits on the number of variables that could be intervened on per intervention or model in their derived bound. Moreover, they do not provide algorithms to attain those limits.
\cite{shanmugam2015learning} considered both fixed and adaptive strategies, explicitly incorporating limits on intervention set size while aiming to minimize the number of interventions. While their adaptive algorithms meet the derived bounds, they do not consider intervention costs. Both \cite{kocaoglu2017cost} and \cite{lindgren2018experimental} employ a fixed strategy and explicitly modeled intervention costs, with each variable having an associated cost. They formulated the problem of learning a causal graph as an integer program (IP), aiming to design a set of interventions with minimum total cost that can uniquely identify any causal graph with a given skeleton. The IP model was used to develop algorithms for specific graph structures, such as trees or clique trees. However, the cost structure is assumed to be additive. \cite{ghassami2019interventional} employed a fixed strategy focusing on single-variable interventions and the number of interventions that can be performed. They did not explicitly consider intervention costs.
\cite{porwal2022universal} establish a new universal lower bound on the number of single-node interventions required to fully orient an essential graph. The paper did not propose any new algorithm but rather focused on establishing theoretical bounds. The proposed lower bound applies to any algorithm, whether adaptive or fixed. Finally, \cite{elrefaey2024causal} allowed for explicit modeling of intervention (possibly nonlinear) costs and limits on intervention set size. They formulated a flexible IP model to determine the minimal set of interventions required for causal identifiability under a variety of practical scenarios. However, their approach was for a fixed design strategy.

\section{Methodology}
\label{sec:methodology}

We address the \emph{adaptive design of interventions} for causal structure learning, where, based on the current knowledge of causal structure, each intervention aims to reveal as much additional causal structure knowledge as possible.

\paragraph{Assumptions.} The intervention-based causal structure discovery and Meek's rules can be used because the following standard assumptions of causal graphs are applied:
\begin{enumerate}
    \item \textbf{Faithfulness:} All statistical independencies observed in the data correspond to d-separations \cite{pearl2009causality} in the true DAG.
    \item \textbf{Sufficiency:} All relevant variables are measured, so there are no unobserved confounders.
    \item \textbf{Conditional Independence Oracle:} We assume access to (or a reliable approximation of) a perfect conditional independence test.
\end{enumerate}

\noindent Under these assumptions, and building upon the previous work of \cite{hu2014randomized, kocaoglu2017cost, lindgren2018experimental, elrefaey2024causal}, we solve an IP model, \textsc{Adaptive\_IP}, to adaptively choose the set of variables to intervene on. We choose the set which maximizes the knowledge learnt about the true causal DAG, \(G^*\), subject to some constraints such as a fixed intervention budget and/or a limit on how many variables can be intervened on in each experiment. After each intervention, the \textsf{PKG} is updated, and Meek's rules are applied to logically propagate further orientations. This approach is implemented as described in Algorithm \ref{alg:adaptiveip_meek}. Proofs of correctness and finite convergence of this algorithm can be found in Appendix \ref{proofs}.

\begin{algorithm}[h]
   \caption{Iterative Orientation via Adaptive IP and Meek's Rules}
   \label{alg:adaptiveip_meek}
   \small
\begin{algorithmic}[0]  
   \STATE {\bfseries Input:} A \textsf{PKG} with edges partitioned into: ${E}_{\text{Known}}$ (oriented edges), ${E}_{\text{Unknown}}$ (undecided), ${E}_{\text{semi-directed}}$ (semi-directed), ${E}_{\text{Adjacent}}$ (undirected).
   \STATE {\bfseries Goal:} Fully orient all edges into ${E}_{\text{Known}}$.

   \REPEAT
     \STATE \textbf{Run} \textsc{Adaptive\_IP} on \textsf{PKG} to propose orientations:
        \STATE \quad $\{\text{actions}\} \leftarrow \textsc{Adaptive\_IP}(\text{\textsf{PKG}})$.
     \STATE \textbf{Update} \textsf{PKG} edge sets based on the chosen solution and intervention results.
     \STATE \textbf{Apply} \textsc{Meek's Rules} to orient additional edges:
        \STATE \quad $\textsc{MeeksRules}(\text{\textsf{PKG}}) \rightarrow \text{\textsf{PKG}}'$.
     \STATE \textbf{Replace} \textsf{PKG} with the updated \textsf{PKG}$'$.
   \UNTIL{all edges are placed into ${E}_{\text{Known}}$}

   \STATE {\bfseries Output:} Fully oriented graph in ${E}_{\text{Known}}$.
\end{algorithmic}
\end{algorithm}

\paragraph{Partially Known Graph. }
We maintain a \textsf{PKG}  \cite{Eberhardt2008} that evolves over iterations. The vertices in the \textsf{PKG} are the same as the causal DAG that is being learned (\({V} = \{1, 2, \ldots, N\}\)). An edge between a pair of vertices on this graph belongs to exactly one of four relationships, which are
\begin{itemize}
    \item \textbf{Unknown (\(E_{\text{Unknown}}\)):} Edges for which we do not know if they exist or not (fully unknown node adjacency and edge orientation).
    \item \textbf{Adjacent (\(E_{\text{Adjacent}}\)):} Edges whose node adjacency is established (the edge is present), but the orientation is unknown.
    \item \textbf{Semi-directed (\(E_{\text{Semi-directed}}\)):} Edges that may either exist in a known direction or do \emph{not} exist at all.
    \item \textbf{Known (\(E_{\text{Known}}\)):} Edges with both node adjacency and edge orientation fully confirmed.
\end{itemize}
After each intervention, the \textsf{PKG} is revised based on newly identified adjacencies and orientations as shown in section \ref{trans_rules}.

We then apply Meek's rules to further propagate orientations that do not require additional interventions. This \textsf{PKG} formalism allows us to encode a variety of \emph{background knowledge} about the causal structure - including known or forbidden edges - throughout the discovery process. Note that the well-studied \emph{essential graph} is a special case of the \textsf{PKG} that only involves \(E_{\text{Adjacent}}\) and \(E_{\text{Known}}\).

\paragraph{Meek's Rules }

\begin{figure}[ht]
\begin{center}
\centerline{\includegraphics[width=\linewidth,height=0.4\textwidth]{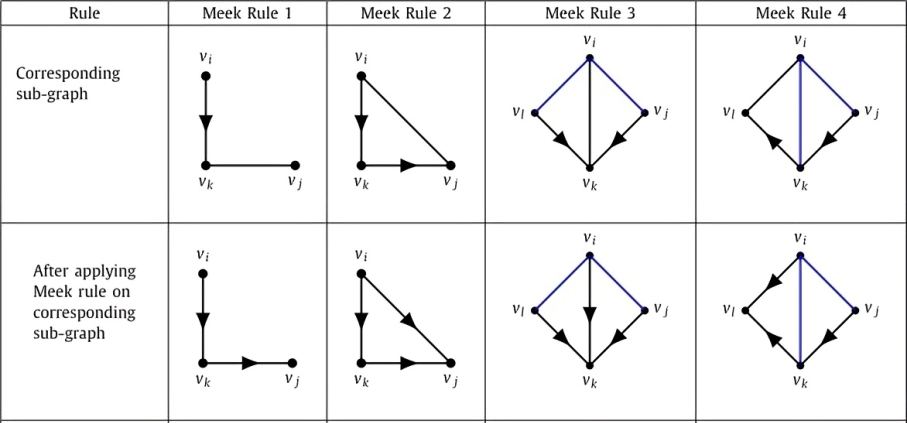}}
\caption{Meek Rules of Edge Orientation}
\label{meek}
\end{center}
\end{figure}

For a partially directed graph \({G} = ({V}, E)\), let \((v_i - v_j)\) denote an undirected edge and \((v_i \to v_j)\) a directed edge. If \((v_i, v_j) \notin E\), then \(v_i\) and \(v_j\) share no edge (directed or undirected). Meek's rules (R1)-(R4) orient edges as follows (Figure \ref{meek}):

\begin{itemize}
\item[\textbf{(R1)}]
If \(\exists\,v_i\) with \((v_i \to v_k) \in E\) and \((v_i, v_j) \notin E\), orient \((v_k - v_j)\) as \((v_k \to v_j)\).
\item[\textbf{(R2)}]
If \(\exists\,v_k\) with \((v_i \to v_k) \in E\) and \((v_k \to v_j) \in E\), orient \((v_i - v_j)\) as \((v_i \to v_j)\).
\item[\textbf{(R3)}]
If \(\exists\,v_k,v_\ell\) with 
\((v_i - v_k)\in E,\,
(v_i - v_\ell)\in E,\,
(v_k \to v_j)\in E,\,
(v_\ell \to v_j)\in E\),
and \((v_k, v_\ell)\notin E\),
then orient \((v_i - v_j)\) as \((v_i \to v_j)\).
\item[\textbf{(R4)}]
If \(\exists\,v_b,v_d\) with
\((v_b \to v_j)\in E,\,
(v_i - v_d)\in E,\,
(v_i - v_b)\in E,\,
(v_d \to v_b)\in E\),
and \((v_j, v_d)\notin E\),
then orient \((v_i - v_j)\) as \((v_i \to v_j)\).
\end{itemize}

By iterating these rules until none apply, one can propagate existing directions to newly orient edges without additional interventions or data collection.

\subsection{IP Model Formulation}
We formulate the \textsc{Adaptive\_IP} model as follows:

\subsubsection{Notation}

\begin{description}[labelwidth=1.5cm, labelsep=0.2cm, leftmargin=1.7cm]
    \item[$B$] 
    Maximum allowed budget for one intervention (iteration).
    
    \item[$k_{max}$] 
    Maximum allowed number of variables that can be intervened in simultaneously.
    
    \item[$\mathcal{X}$]
    A subset of vertices $V$ (i.e., $\mathcal{X} \subseteq V$) that are \emph{viable} for intervention. Formally,
      $\mathcal{X} = \{\, i \in V \;|\; i \text{ is incident on an edge in } E_{\text{Unknown}} \cup E_{\text{Adjacent}} \cup E_{\text{Semi-directed}} \}$.

    \item[$X_i$] 
    A binary variable indicating whether or not vertex $i$ is selected to be intervened on ($X_i = 1$ or $X_i = 0$, respectively). Only vertices in $\mathcal{X}$ can be potentially selected to have $X_i = 1$.

    \item[$CI_i$] 
    Cost of intervening on variable $i$.

    \item[$CO_i$] 
    Cost of observing variable $i$.

    \item[$O_{ij}$]
    A binary variable indicating whether edge $(i,j)$ is \emph{subject to an orientation test}. In other words, $O_{ij} = 1$ iff vertex $i$ is intervened on ($X_i = 1$) but vertex $j$ is \emph{not} ($X_j = 0$). This variable is only defined for edges in $E_{\text{Unknown}} \cup E_{\text{Semi-directed}} \cup E_{\text{Adjacent}}$.

    \item[$A_{ij}$]
    A binary variable indicating whether edge $(i,j)$ is \emph{subject to an adjacency test}. Specifically, $A_{ij} = 1$ iff neither vertex $i$ nor vertex $j$ is intervened on ($X_i = 0$ and $X_j = 0$). This variable is only defined for edges in $E_{\text{Unknown}} \cup E_{\text{Semi-directed}}$.

    \item[$IDU_{ij}$]
    A binary variable indicating that an edge $(i,j)$ in $E_{\text{Unknown}}$ has been \emph{updated} by the current solution. That is, $IDU_{ij} = 1$ if the model decides to move $(i,j)$ from $E_{\text{Unknown}}$ into another category of edges.

    \item[$IDS_{ij}$]
    A binary variable indicating that an edge $(i,j)$ in $E_{\text{Semi-directed}}$ has been \emph{updated}.

    \item[$IDA_{ij}$]
    A binary variable indicating that an edge $(i,j)$ in $E_{\text{Adjacent}}$ has been \emph{updated}.

\end{description}

\subsubsection{Objective Function}
\begin{equation}
\label{o.f.}
\text{maximize} \quad \sum_{(i,j) \in E_{\text{Unknown}}} IDU_{ij} + \sum_{(i,j) \in E_{\text{Semi-directed}}} IDS_{ij} + \sum_{(i,j) \in E_{\text{Adjacent}}} IDA_{ij}
\end{equation}

\subsubsection{Constraints}

\begin{equation}
\label{budget}
\sum_{i \in \mathcal{X}} \bigl(CI_i \cdot X_i + CO_i \cdot (1-X_i)\bigr) \leq B 
\end{equation}
\begin{equation}
\label{kmax}
\sum_{i \in \mathcal{X}} X_i \leq k_{\text{max}} 
\end{equation}
\begin{equation}
\label{o1}
O_{ij} \leq X_i \quad\quad \forall (i,j) \in E_{\text{Unknown}} \cup E_{\text{Semi-directed}} \cup E_{\text{Adjacent}}
\end{equation}
\begin{equation}
\label{o2}
O_{ij} \leq 1 - X_j \quad\quad \forall (i,j) \in E_{\text{Unknown}} \cup E_{\text{Semi-directed}} \cup E_{\text{Adjacent}}
\end{equation}
\begin{equation}
\label{a1}
A_{ij} \leq 1 - X_i \quad\quad \forall (i,j) \in E_{\text{Unknown}} \cup E_{\text{Semi-directed}}
\end{equation}
\begin{equation}
\label{a2}
A_{ij} \leq 1 - X_j \quad\quad \forall (i,j) \in E_{\text{Unknown}} \cup E_{\text{Semi-directed}}
\end{equation}
\begin{equation}
\label{IDU}
IDU_{ij} \leq O_{ij} + O_{ji} + A_{ij} \quad\quad \forall (i,j) \in E_{\text{Unknown}}
\end{equation}
\begin{equation}
\label{IDS}
IDS_{ij} \leq O_{ij} + A_{ij} \quad\quad \forall (i,j) \in E_{\text{Semi-directed}}
\end{equation}
\begin{equation}
\label{IDA}
IDA_{ij} \leq O_{ij} + O_{ji} \quad\quad \forall (i,j) \in E_{\text{Adjacent}}
\end{equation}
\begin{equation}
\label{binary}
X_i, O_{ij}, A_{ij}, IDU_{ij}, IDS_{ij}, IDA_{ij} \in \{0,1\}
\end{equation}

The budget constraint (Eq. (\ref{budget})) restricts the combined cost of interventions and observations to remain within the allowed budget \(B\), accounting for the intervention cost (\(CI_i\)) and observation cost (\(CO_i\)) for each variable. The next constraint (Eq. (\ref{kmax})) ensures that at most \(k_{\text{max}}\) variables can be intervened on simultaneously, limiting the number of active manipulations per intervention. Orientation tests (Eqs. (\ref{o1}) and (\ref{o2})) are defined such that an edge \((i,j)\) can only be subject to an orientation test if the source vertex \(i\) is intervened on (\(X_i = 1\)) and the target vertex \(j\) is not (\(X_j = 0\)). Similarly, adjacency tests (Eqs. (\ref{a1}) and (\ref{a2})) require that neither vertex of an edge is intervened on (\(X_i = 0\) and \(X_j = 0\)) for the test to apply. The update constraints (Eqs. (\ref{IDU}), (\ref{IDS}), and (\ref{IDA})) ensure that an edge in \(E_{\text{Unknown}}, E_{\text{Semi-directed}},\) or \(E_{\text{Adjacent}}\) can only be updated if it is subject to the relevant orientation or adjacency tests. Finally, the binary constraint (Eq. (\ref{binary})) forces all decision variables to be binary. We wish to maximize the sum of edge updates given by objective \ref{o.f.}. 

The model is quite modular and offers flexibility to accommodate various alternative objectives and constraints. We showcase a wide variety of model extensions in Appendix \ref{model_extensions}.

\subsection{Transition Rules for Edge Sets}
\label{trans_rules}

The following rules govern how edges move between the sets \(E_{\text{Unknown}}, E_{\text{Semi-directed}}, E_{\text{Adjacent}},\) and \(E_{\text{Known}}\) based on the results of interventions and tests. These rules are summarized in Table \ref{tab:transition_crosstab}.

\begin{table}[h]
\caption{Conditions and actions for edge transitions between edge sets.}
\small
\centering
\begin{tabular}{p{2.2cm}p{3.2cm}p{3.2cm}p{3.2cm}}
\toprule
\textbf{From \textbackslash To} & \(\mathbf{E_{\text{Known}}}\) & \(\mathbf{E_{\text{Semi-directed}}}\) & \(\mathbf{E_{\text{Adjacent}}}\)\\
\midrule
\(\mathbf{E_{\text{Unknown}}}\) & 
\(O_{ij} = 1\) \& presence. & 
\(O_{ij} = 1\) \& absence. & 
\(A_{ij} = 1\) \& presence. \\ 

\(\mathbf{E_{\text{Semi-directed}}}\) & 
(\(O_{ij} = 1\) or \(A_{ij} = 1\)) \& presence. & 
 & 
 \\
\(\mathbf{E_{\text{Adjacent}}}\) & 
(\(O_{ij} = 1\) or \(O_{ji} = 1\)) \& presence. & 
 & 
 \\ 
 \bottomrule
\end{tabular}
\\[1.5ex]
\label{tab:transition_crosstab}
\end{table}
\subparagraph{From \(E_{\text{Unknown}}\):}
 If the edge \((i, j)\) is subject to an orientation test (\(O_{ij} = 1\)) and the intervention reveals the edge to be present, then the edge is added to {\(E_{\text{Known}}\) in the tested direction. Otherwise, if it is absent, it is added to \(E_{\text{Semi-directed}}\) in the \textit{reverse direction \((j \to i)\)}. If the edge \((i, j)\) is subject to an adjacency test (\(A_{ij} = 1\)) and the intervention reveals that its present, the edge is moved to \(E_{\text{Adjacent}}\). Once moved into another edge set, the edge is removed from \(E_{\text{Unknown}}\). Also, if the edge is subject to an adjacency test and the intervention reveals that the edge is absent, it is removed from \(E_{\text{Unknown}}\).
\subparagraph{From \(E_{\text{Semi-directed}}\):}
If the edge \((i, j)\) is subject to an orientation test or an adjacency test, and the intervention reveals the edge to be present, it is moved to \(E_{\text{Known}}\) with its resolved direction.
The edge is then removed from \(E_{\text{Semi-directed}}\) and likewise if the intervention reveals it to be absent.
\subparagraph{From \(E_{\text{Adjacent}}\):}
If the edge \((i, j)\) is subject to an orientation test and the intervention reveals the edge to be present, it is moved to \(E_{\text{Known}}\). The edge is then removed from \(E_{\text{Adjacent}}\).

\section{Experiments}
\label{sec:Experiments}

To evaluate the performance of Algorithm \ref{alg:adaptiveip_meek}, we conduct a series of simulations using synthetic graphs as well as real graphs. The primary goal of these experiments is to assess the model's ability to learn the true DAG under different conditions such as graph sizes, edge densities, and intervention size limits. While the algorithm can work for any initial \textsf{PKG}, for the purpose of this study, the initial \textsf{PKG} is made to be the correct \emph{essential} graph of the true graph.

\subsection{Synthetic Graphs}

For each simulation, we generate 50 random DAGs using the $\mathcal{G}(N,p)$ Erdős-Rényi model. This model generates an undirected graph with $N$ nodes, where each edge is created independently with a probability $p$. The edges are then directed according to the identity topological ordering of the nodes to ensure acyclicity.
This is done for the set of parameters: \(\{(N,p) \ |\ N \in \{3, 4, 8, 16, 24, 32, 48, 64, 96, 128, 256\},\ p \in \{0.05, 0.2, 0.5, 0.7, 0.95\}\}\).

\subsection{Real Graphs}

Algorithm \ref{alg:adaptiveip_meek} was tested against 9 real graphs. The graphs have different sizes, densities, and degree distributions. Table \ref{table:rgs} summarizes the structural statistics of each graph.

\begin{table}[h]
  \centering
  \small
  \caption{Structural statistics of benchmark Graphs}
  \begin{tabular}{lrrrrrrr}
    \toprule
    Network  & Nodes & Edges & Min\ deg.& Avg.\ deg.& Max\ deg.& Stdev.\ deg.& \# v-structures\\
    \midrule
    asia        &   8 &   8 & 1 & 2.00 &   4 &  0.93 &   2\\
    sachs       &  11 &  17 & 2 & 3.09 &   7 &  1.64 &   0\\
    insurance   &  27 &  52 & 1 & 3.85 &   9 &  2.03 &  23\\
    alarm       &  37 &  46 & 1 & 2.49 &   6 &  1.35 &  24\\
    hailfinder  &  56 &  66 & 1 & 2.36 &  17 &  2.40 &  34\\
    win95pts    &  76 & 112 & 1 & 2.95 &  10 &  2.01 & 129\\
    pathfinder  & 109 & 195 & 1 & 3.58 & 106 & 10.16 &  16\\
    andes       & 223 & 338 & 0 & 3.03 &  12 &  1.87 & 313\\
    link        & 724 &1125 & 0 & 3.11 &  17 &  2.49 & 821\\
    \bottomrule
  \end{tabular}
  \label{table:rgs}
\end{table}

\subsubsection{Comparison Parameters and Metrics}

The following parameters are varied across the simulations:
\begin{itemize}
    \item \textbf{Maximum Interventions ($k_{max}$)}: We limited the maximum number of simultaneous interventions in each iteration to $k_{max} \in \{1, 2, 4, 6\}$.
    \item \textbf{Methods}: The performances of two methods are compared:
        \begin{itemize}
            \item \textbf{`r'}: A variable subset, $\mathcal{S}$ $(|\mathcal{S}| \leq k_{max})$, is chosen at random at each iteration. 
            \item \textbf{`IP':}
            \textsc{Adaptive\_IP}
        \end{itemize}
\end{itemize}

The following metrics are recorded from each experiment:
\begin{itemize}
    \item \textbf{Number of Iterations}: The number of iterations required to complete the causal discovery process.
    \item \textbf{Total Variables Intervened On}: The total number of variables that are intervened on across all iterations.
\end{itemize}

For the purpose of this study, we assume that the cost of interventions is negligible and hence constraint \ref{budget} is not added to the model. Additionally, for instances where multiple optimal solutions to the \textsc{Adaptive\_IP} model exist, a random one is chosen.
Simulations are implemented in Python using the NetworkX \cite{hagberg2008exploring} library for synthetic graph generation and the Gurobi solver for solving the \textsc{Adaptive\_IP} model. 

\subsection{Results and Discussion}
\label{sec:results-discussion}

Figures~\ref{fig:addl_interventions_sg} and~\ref{fig:addl_interventions_rg}
plot, for every experimental setting, the quantity
\[
\Delta_{rounds} \;=\;
(\#\text{rounds}_{\textit{r}}) -
(\#\text{rounds}_{\textit{Adaptive\_IP}})
\] 
that is the \emph{additional} intervention iterations required by the random
baseline.  Hence positive values indicate an unambiguous advantage for the
optimized strategy, while \(\Delta_{rounds}=0\) corresponds to parity.
Across \emph{all} experiments median \(\Delta_{rounds}\ge0\), confirming that
Adaptive\_IP never performs worse than random selection and often yields
substantial savings.

\paragraph{Synthetic Erdős-Rényi graphs}
(Figure~\ref{fig:addl_interventions_sg}).
The magnitude of the gap depends strongly on graph size~\(N\), edge
probability~\(p\), and the per-round intervention budget~\(k_{\max}\):

\textbf{$k_{max}$ dependence} 
For each facet, when \(k_{\max}=6\), the distributions sit highest for (\(0.20\le p\le0.70\)): when many variables can
be perturbed in parallel, Adaptive\_IP can exploit the large combinatorial space to pick a particularly informative six-tuple, whereas random choice often squanders part of that budget. In contrast, for the dense regime (\(p=0.95\)) the benefit is \emph{largest} when only \emph{one} variable can be manipulated.  There the baseline’s single random knock-out is frequently uninformative, whereas the IP solver consistently targets a key hub; as \(k_{\max}\) increases the gap collapses because almost any two or more perturbations break enough cycles to let Meek’s rules finish the job.

\textbf{Density dependence}  
Very sparse graphs (\(p=0.05\)) and very dense graphs (\(p=0.95\)) exhibit the broadest inter-quartile ranges—up to twelve rounds in the sparse case and more than sixty rounds in the dense case—because each contains few v-structures, leaving Meek’s propagation with little to do; progress hinges on how informative the chosen interventions are. Intermediate densities (\(0.20\le p\le0.70\)) feature many v-structures, hence once a handful of orientations are fixed Meek’s rules cascade through the graph and even randomly chosen interventions achieve respectable
coverage; the median advantage in these panels rarely exceeds ten rounds.

\begin{figure}[h]
\includegraphics[width=\linewidth,height=0.4\linewidth]{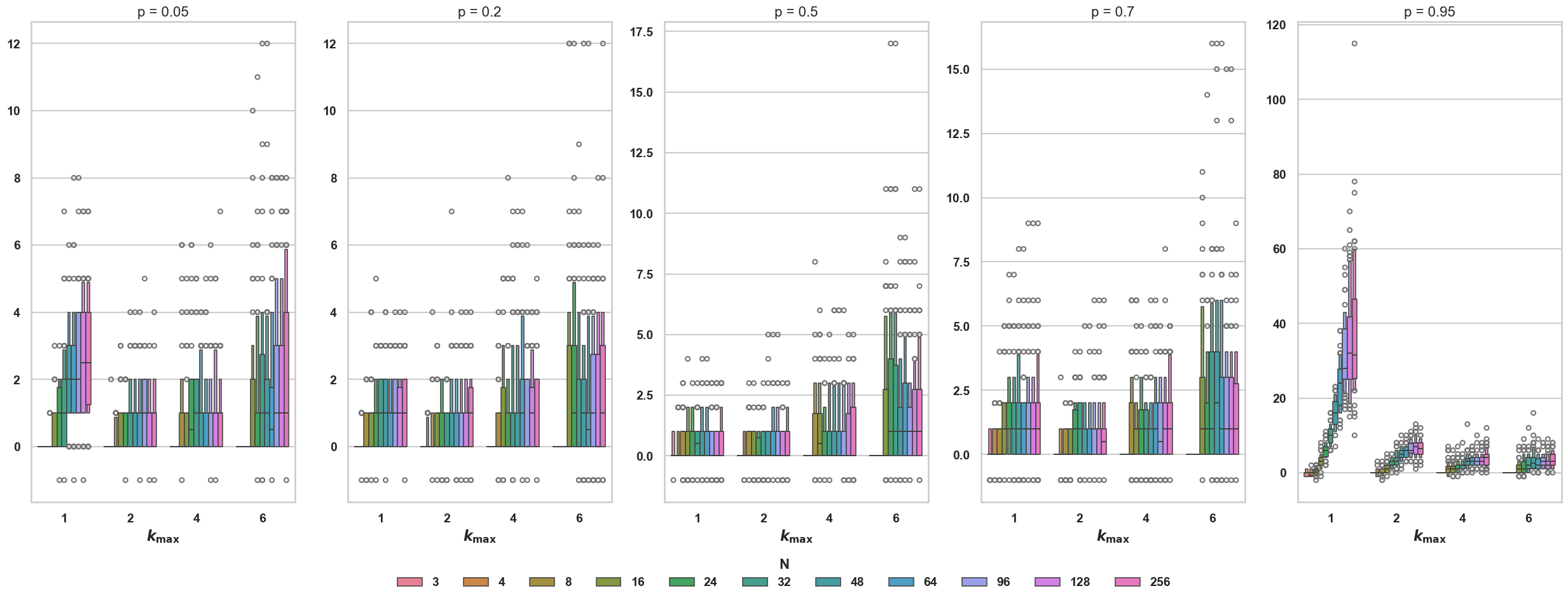}
  \caption{%
    \textbf{Number of additional intervention rounds required when using method
    $r$ versus \textit{Adaptive\_IP}.}
    Each panel corresponds to a graph-edge probability $p$; bars show the
    distribution of $r-\mathrm{IP}$ across seeds for different intervention
    limits $k_{\max}$ and network sizes $N$.}
  \label{fig:addl_interventions_sg}
\end{figure}
\vspace{-\baselineskip}

\paragraph{Real benchmark graphs.}
Figure~\ref{fig:addl_interventions_rg} compares the extra intervention
rounds required by the random baseline across the nine real networks whose
structural statistics are listed in Table~\ref{table:rgs}.
interventions are allowed. The extent of the advantage of the adaptive IP framework differs sharply from graph
to graph and tracks well with the topological indicators in
Table~\ref{table:rgs}.
In \textit{pathfinder} the median saving exceeds forty iterations; the
network contains a central hub node (maximum degree~106) and progress hinges on perturbing that specific node—something the IP
identifies reliably but a random pick rarely does.
\textit{hailfinder} and \textit{insurance} also benefit markedly
(\(\Delta_{rounds}\approx 4\!-\!8\)) because their degree distributions are
skewed and they possess only a moderate number of v-structures (34 and 23, respectively), leaving Meek’s rules with limited reach unless the right edges are experimentally oriented.
At the other extreme, \textit{link} shows virtually no median improvement
even at \(k_{\max}=1\); with 821 v-structures, many edges are already oriented in the MEC,  and the random baseline performs as well as the optimized selection.
Small sparse graphs such as \textit{asia} and \textit{sachs} exhibit the same phenomenon on a lesser scale—the uncertainty is so limited that even a
naïve intervention suffices to complete the discovery in just one or two additional rounds. Taken together, the real-graph results highlight \textbf{when} an adaptive IP planner is worth its computational cost: namely, in
networks whose orientation hinges on a few critical hubs or on edges that
cannot be resolved by Meek’s rules alone, and in experimental regimes where
only one variable can be perturbed at a time.
When many v-structures are present—or when even a modest multi-variable
budget is available—the simpler random strategy approaches parity.

Figures \ref{fig:addl_variables_sg} and \ref{fig:addl_variables_rg} demonstrate similar results for the number of additional number of variable-manipulations required for graph identification by the random baseline. 

These simulations aim to validate the feasibility of Algorithm \ref{alg:adaptiveip_meek} under the assumptions listed in Section \ref{sec:methodology}. While the focus of the simulations conducted was on the efficiency of the selected interventions in learnine the true graph, we emphasize that the reported benefit of the framework is attributed to the nature of Objective \ref{o.f.}. There exists other objectives (see Section \ref{sec:objs}) that can outperform Objective \ref{o.f.} (which is myopic), albeit, tend to be more computationally expensive to evaluate. Furthermore, we emphasize that the main contribution of this work is its offered modularity and flexibility. For instance, the same framework can be utilized with an alternative objective while accommodating practical constraints such as intervention costs and experimentation budget, limits on the number of variables that can be intervened on per iteration, prior knowledge, and other considerations by simply replacing Objective \ref{o.f.} with another. No other adaptive intervention design strategy offers this level of flexibility and hence it was difficult to directly compare the proposed framework to methods found in the literature.

\section{Conclusion}
This paper presented a novel adaptive intervention strategy for causal discovery built upon iterative Integer Programming. A primary contribution is the IP framework's flexibility, allowing seamless integration of practical constraints and diverse objectives, which is challenging for existing methods. Simulations demonstrated performance improvements over random baselines, and theoretical analysis confirmed finite convergence and correctness under assumptions including Causal Sufficiency and a Perfect Independence Oracle. The inherent modularity of our approach paves the way for crucial future research, particularly focused on relaxing these assumptions to robustly handle unobserved confounders and work with finite, noisy real-world data.

\clearpage
\bibliographystyle{ieeetr}
\bibliography{example_paper}

\clearpage
\appendix
\section{Additional Simulation Figures}

\begin{figure}[ht]
  \includegraphics[width=\textwidth]{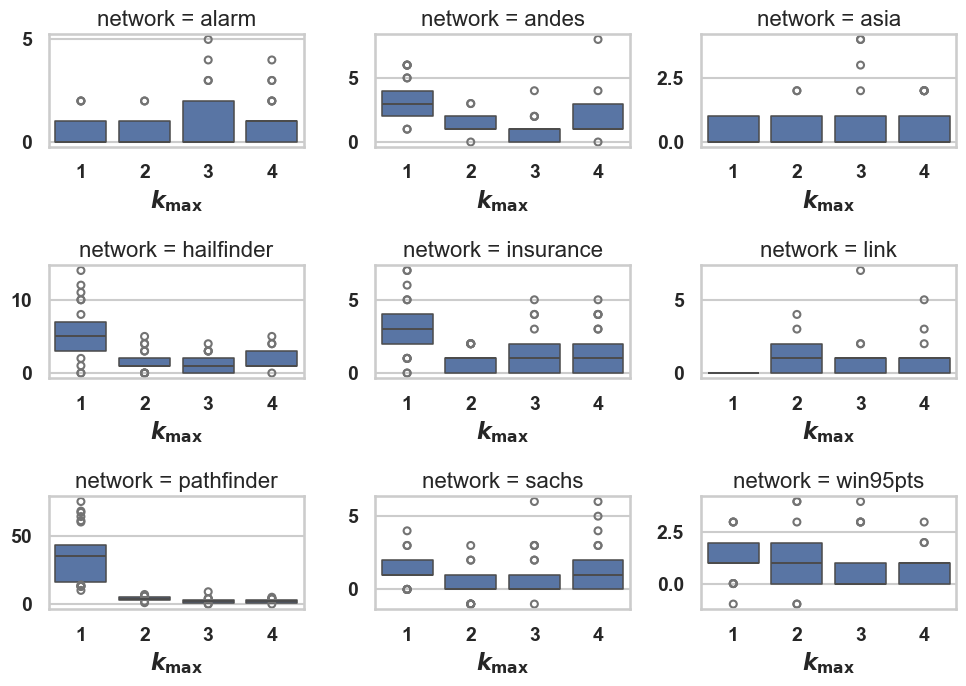}
  \caption{%
    \textbf{Number of additional intervention rounds required when using method
    $r$ versus \textit{Adaptive\_IP}.}
    Each panel corresponds to one of 9 networks; bars show the
    distribution of $r-\mathrm{IP}$ across seeds for different intervention
    limits $k_{\max}$.}
  \label{fig:addl_interventions_rg}
\end{figure}

\begin{figure}[ht]
  \includegraphics[width=\linewidth]{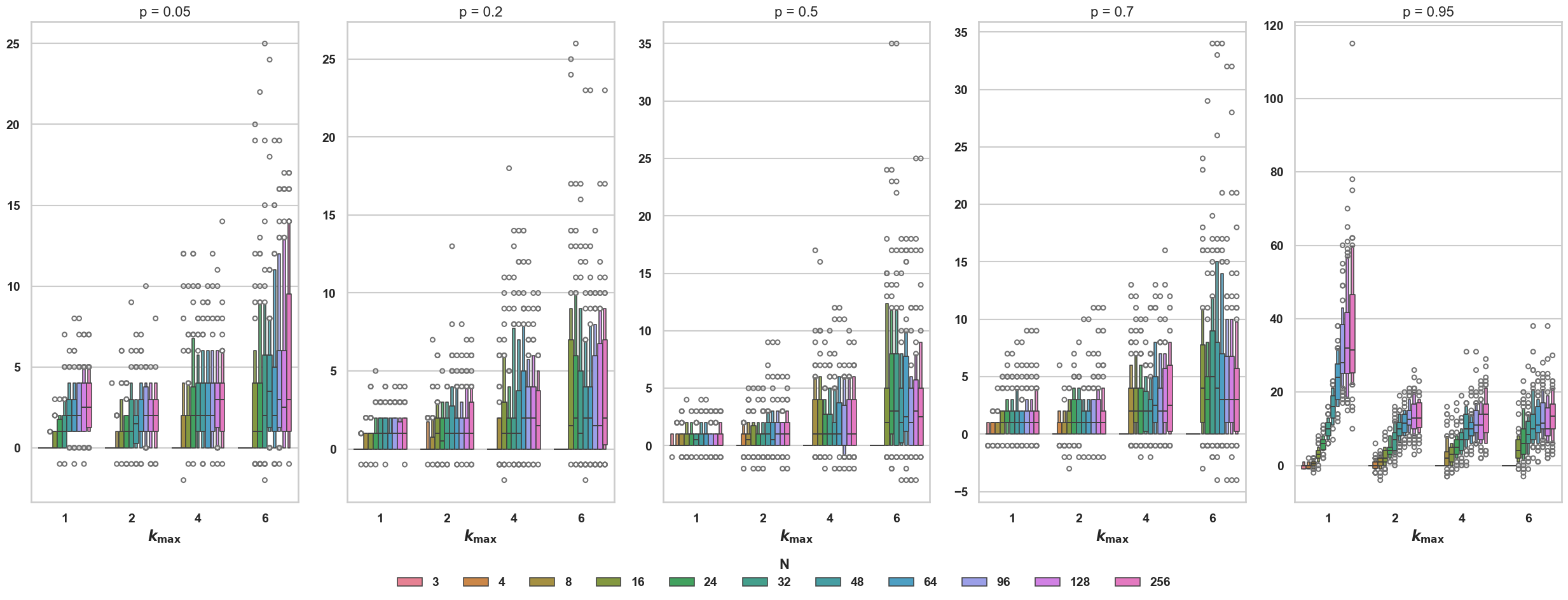}
  \caption{%
    \textbf{Number of additional variables manipulations required when using method
    $r$ versus \textit{Adaptive\_IP}.}
    Each panel corresponds to a graph-edge probability $p$; bars show the
    distribution of $r-\mathrm{IP}$ across seeds for different intervention
    limits $k_{\max}$ and network sizes $N$.}
  \label{fig:addl_variables_sg}
\end{figure}

\begin{figure}[ht]
  \includegraphics[width=\textwidth]{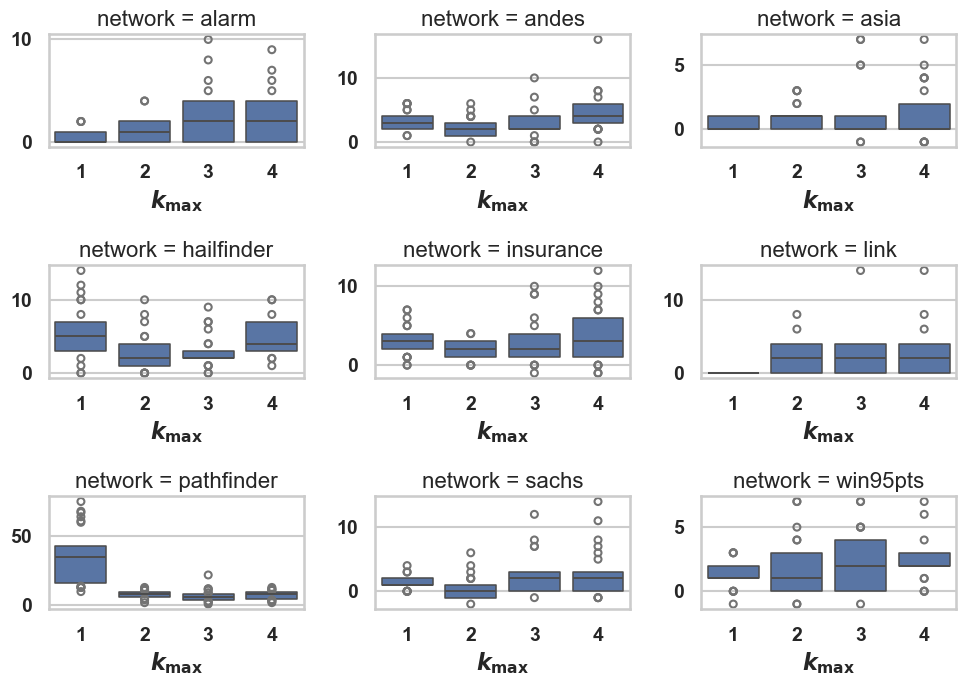}
  \caption{%
    \textbf{Number of additional variable manipulations required when using method
    $r$ versus \textit{Adaptive\_IP}.}
    Each panel corresponds to one of 9 networks; bars show the
    distribution of $r-\mathrm{IP}$ across seeds for different intervention
    limits $k_{\max}$.}
  \label{fig:addl_variables_rg}
\end{figure}

\clearpage
\section{Proofs \& Theoretical Analysis} 
\label{proofs}

In this appendix, we provide theoretical guarantees for the proposed adaptive intervention design algorithm (Algorithm \ref{alg:adaptiveip_meek}). These results hold under the standard assumptions commonly made in causal discovery from interventions:

\begin{enumerate}[label=\textbf{A\arabic*.}, itemsep=1pt, topsep=3pt]
    \item \textbf{Causal Sufficiency:} The set of observed variables \(V\) includes all common causes of pairs of variables in \(V\).
    \item \textbf{Faithfulness:} The probability distribution \(P\) over \(V\) is faithful to the true causal Directed Acyclic Graph (DAG) \(G^* = (V, E^*)\). All conditional independencies in \(P\) are entailed by the structure of \(G^*\) via d-separation.
    \item \textbf{Perfect Interventions:} An intervention on a set \(I \subseteq V\) sets the values of variables in \(I\), severing all incoming edges to nodes in \(I\) in \(G^*\), resulting in a manipulated graph \(G^*_{\text{do}(I)}\) and distribution \(P_{\text{do}(I)}\).
    \item \textbf{Perfect Conditional Independence (CI) Oracle:} There exists an oracle that correctly determines any conditional independence relationship \(X \perp Y \mid Z\) in any distribution \(P_{\text{do}(I)}\) generated by an intervention \(I \subseteq V\) on the true causal model.
    \item \textbf{Acyclicity:} The true underlying causal structure \(G^*\) is a DAG.
    \item \textbf{Finitude:} The set of variables \(V\) is finite, \(|V| = N\).
\end{enumerate}

We analyze the state of knowledge using the \textsf{PKG}, represented by the partition of all possible ordered pairs \((i, j)\) with \(i \neq j\) into the sets \(E_{\text{Known}}\), \(E_{\text{Adjacent}}\), \(E_{\text{Semi-directed}}\), and \(E_{\text{Unknown}}\). (Note: \(E_{\text{Adjacent}}\) contains undirected edges, formally pairs \(\{i, j\}\), but we can represent it with pairs \((i, j)\) and \((j, i)\) having linked status). Let \(S_t = (\textsf{PKG}_t)\) denote the state at the beginning of iteration \(t\).

\begin{theorem}[Finite Convergence]
\label{thm:convergence}
Under Assumptions A1-A6, Algorithm \ref{alg:adaptiveip_meek} terminates in a finite number of iterations.
\end{theorem}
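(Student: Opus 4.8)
The plan is to exhibit a strictly decreasing, integer-valued potential on the state $S_t = \textsf{PKG}_t$ and then invoke the finiteness of $V$ (Assumption A6) to bound its initial value. First I would define
\[
\Phi_t \;=\; 2\,\lvert E_{\text{Unknown}}^{(t)}\rvert + \lvert E_{\text{Semi-directed}}^{(t)}\rvert + \lvert E_{\text{Adjacent}}^{(t)}\rvert,
\]
and observe that $\Phi_t$ is a non-negative integer bounded above by $2N(N-1)$, since by A6 there are only finitely many ordered pairs to classify. The terminal condition ``all edges in $E_{\text{Known}}$'' (with absent edges removed) corresponds exactly to $\Phi_t=0$, so it suffices to show that $\Phi$ strictly decreases on every non-terminal iteration and never increases during the Meek phase.

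Next I would verify that every admissible edge transition strictly lowers $\Phi$. The transition rules of Section~\ref{trans_rules} induce a DAG on the four states: $E_{\text{Unknown}}$ may only move to $E_{\text{Semi-directed}}$, $E_{\text{Adjacent}}$, $E_{\text{Known}}$, or be deleted; $E_{\text{Semi-directed}}$ may only move to $E_{\text{Known}}$ or be deleted; and $E_{\text{Adjacent}}$ may only move to $E_{\text{Known}}$. No rule ever moves an edge ``upward,'' so it remains only to check the weights: an $E_{\text{Unknown}}$ transition changes $\Phi$ by at most $-2+1=-1$, while every $E_{\text{Semi-directed}}$ or $E_{\text{Adjacent}}$ transition changes it by $-1$; hence each individual transition lowers $\Phi$ by at least one. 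The \textsc{Meek's Rules} phase only orients $E_{\text{Adjacent}}$ edges into $E_{\text{Known}}$, so it too can only decrease (weakly) $\Phi$.

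The crux of the argument, and the step I expect to be the main obstacle, is showing that on every non-terminal iteration the \textsc{Adaptive\_IP} solution triggers at least one transition, i.e.\ that its optimal objective is at least $1$. I would argue this by exhibiting a feasible solution of value $\geq 1$ whenever an unresolved edge exists: for any $(i,j)\in E_{\text{Unknown}}\cup E_{\text{Semi-directed}}$, the all-observation assignment $X\equiv 0$ makes $A_{ij}=1$ admissible and drives $IDU_{ij}$ or $IDS_{ij}$ to $1$; for an edge in $E_{\text{Adjacent}}$, setting $X_i=1,\ X_j=0$ makes $O_{ij}=1$ admissible and drives $IDA_{ij}=1$. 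The delicate points are (i) feasibility under constraints (\ref{budget}) and (\ref{kmax}), which holds provided $k_{\max}\ge 1$ and the budget permits at least one intervention (as in our experiments, where (\ref{budget}) is dropped), and (ii) that the IP's binary ``update'' decision genuinely forces a set transition rather than a no-op---this is precisely what the transition rules guarantee, using Assumptions A3--A4 so that each orientation or adjacency test returns a definite present/absent verdict that relocates the edge to a strictly lower state.

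Finally I would assemble these facts: $\Phi_0\le 2N(N-1)<\infty$, each non-terminal iteration decreases $\Phi$ by at least one (the IP phase contributes a strict decrease and the Meek phase a weak one), and $\Phi\ge 0$ throughout. A strictly decreasing sequence of non-negative integers must reach $0$ in at most $\Phi_0$ steps, so the algorithm terminates after at most $2N(N-1)$ iterations, establishing finite convergence.
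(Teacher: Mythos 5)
Your proof is correct, and it takes a genuinely different (and arguably cleaner) route than the paper's. The paper uses the unweighted ambiguity count $M(S_t)=\lvert E_{\text{Unknown}}\rvert+\lvert E_{\text{Semi-directed}}\rvert+\lvert E_{\text{Adjacent}}\rvert$, which only \emph{weakly} decreases: lateral moves such as $E_{\text{Unknown}}\to E_{\text{Adjacent}}$ or $E_{\text{Unknown}}\to E_{\text{Semi-directed}}$ leave $M$ unchanged, so the paper must append a secondary argument that the finite state space admits no cycle because $\lvert E_{\text{Unknown}}\rvert$ is monotonically non-increasing and cannot recover. Your weighted potential $\Phi_t=2\lvert E_{\text{Unknown}}^{(t)}\rvert+\lvert E_{\text{Semi-directed}}^{(t)}\rvert+\lvert E_{\text{Adjacent}}^{(t)}\rvert$ absorbs exactly that case: every admissible transition, including the lateral ones, drops $\Phi$ by at least $1$, which eliminates the cycle analysis entirely and yields an explicit $O(N^2)$ bound on the number of iterations that the paper's proof does not provide. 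You also correctly identify the real load-bearing step --- that the IP's optimal objective is at least $1$ whenever an unresolved edge remains, so that at least one transition fires per non-terminal iteration --- and you are more honest than the paper about its dependence on feasibility (that $k_{\max}\ge 1$ and the budget admits the witnessing assignment, e.g.\ the all-observation solution must satisfy $\sum_i CO_i\le B$); the paper's proof tacitly assumes $Z^*>0$ is attainable without flagging this. Both proofs otherwise rely on the same ingredients (finiteness from A6, definitive test outcomes from A3--A4, and the fact that Meek's rules only move $E_{\text{Adjacent}}$ edges into $E_{\text{Known}}$), so your argument is a strict refinement rather than a departure in substance.
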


\begin{proof}
Let \(S_t = (\textsf{PKG}_t)\) be the state (the configuration of edge sets \(E_{\text{Known}}^{(t)}, E_{\text{Adjacent}}^{(t)}, E_{\text{Semi-directed}}^{(t)}, E_{\text{Unknown}}^{(t)}\)) at the start of iteration \(t\). The total number of ordered pairs \((i, j)\) with \(i \neq j\) is finite, \(N(N-1)\). Since each pair must belong to one of the four sets (or represent a confirmed absence of an edge), the total number of possible states \(S_t\) is finite.

Define a measure of ambiguity at state \(S_t\) as the number of ordered pairs whose status is not fully resolved (i.e., not in \(E_{\text{Known}}\) and not confirmed absent):
\[ M(S_t) = |\{(i, j) \mid (i \to j) \in E_{\text{Semi-directed}}^{(t)}\}| + |\{(i, j) \mid \{i, j\} \in E_{\text{Adjacent}}^{(t)}\}| + |\{(i, j) \mid \{i, j\} \in E_{\text{Unknown}}^{(t)}\}| \]
Note that \(M(S_t)\) is a non-negative integer. \(M(S_t) = 0\) if and only if all relationships are either in \(E_{\text{Known}}\) or confirmed absent, which is the termination condition.

Consider the transition from state \(S_t\) to \(S_{t+1}\) in one iteration of Algorithm \ref{alg:adaptiveip_meek}.
\begin{enumerate}
    \item \textbf{Intervention Selection \& Update:} The \textsc{Adaptive\_IP} model is solved, yielding an intervention \(I^*\). Let the optimal objective value be \(Z^*\). Tests (\(O_{ij}, A_{ij}\)) associated with \(I^*\) are performed using the perfect oracle. The \textsf{PKG} is updated according to the Transition Rules (Table \ref{tab:transition_crosstab}), resulting in an intermediate state \(S'_t\).
        \begin{itemize}
            \item Each application of a transition rule either moves a pair \((i, j)\) or \(\{i, j\}\) to a state of equal or lesser ambiguity (e.g., \(E_{\text{Unknown}} \to E_{\text{Adjacent}}\), \(E_{\text{Adjacent}} \to E_{\text{Known}}\)) or confirms absence (removing it from ambiguity count).
            \item Crucially, no rule moves an edge to a state of strictly greater ambiguity (e.g., \(E_{\text{Known}} \to E_{\text{Adjacent}}\)).
            \item If \(Z^* > 0\), at least one test was performed on an edge in an uncertain state. The perfect oracle yields a definitive outcome, triggering a transition rule. This rule application either moves the edge closer to \(E_{\text{Known}}\) or confirms its absence. In either case, the pair \((i, j)\) or \(\{i, j\}\) involved contributes less or equally to the ambiguity measure \(M\) in \(S'_t\) compared to \(S_t\). At least one relationship's status changes.
        \end{itemize}
        Therefore, \(M(S'_t) \leq M(S_t)\).
    \item \textbf{Meek's Rules Application:} Meek's rules are applied to \(S'_t\) to obtain \(S_{t+1}\). Meek's rules only orient edges in \(E_{\text{Adjacent}}\), moving them to \(E_{\text{Known}}\). They do not add edges or change edges in \(E_{\text{Known}}, E_{\text{Semi-directed}}, E_{\text{Unknown}}\).
        \begin{itemize}
            \item If Meek's rules orient \(k \ge 0\) edges, the ambiguity measure decreases by \(k\) (since each orientation moves a pair from \(E_{\text{Adjacent}}\) to \(E_{\text{Known}}\)).
        \end{itemize}
        Therefore, \(M(S_{t+1}) \leq M(S'_t)\).
\end{enumerate}
Combining these steps, we have \(M(S_{t+1}) \leq M(S_t)\) for all \(t\).

Now, we must show that if the algorithm does not terminate at iteration \(t\), then \(M(S_{t+1}) < M(S_t)\). Non-termination means \(S_{t+1} \neq S_t\).
\begin{itemize}
    \item If the state change occurred during the Intervention Update step (\(S'_t \neq S_t\)), it means \(Z^* > 0\) and at least one edge \(\{i, j\}\) or \((i, j)\) changed status due to a test outcome. If it moved from \(E_{\text{Unknown}}\) to \(E_{\text{Adjacent}}\) or \(E_{\text{Semi-directed}}\), or from \(E_{\text{Semi-directed}}\) to confirmed absence, \(M\) might not strictly decrease but the state \(S'_t\) is different. If it moved to \(E_{\text{Known}}\) or confirmed absence from \(E_{\text{Adjacent}}\) or \(E_{\text{Unknown}}\), \(M\) strictly decreases.
    \item If the state change occurred during the Meek's Rules step (\(S_{t+1} \neq S'_t\)), it means at least one edge was moved from \(E_{\text{Adjacent}}\) to \(E_{\text{Known}}\). This strictly decreases \(M\).
\end{itemize}
Can the state change (\(S'_t \neq S_t\)) without \(M\) strictly decreasing? Yes, e.g., \(E_{\text{Unknown}} \to E_{\text{Adjacent}}\). However, the algorithm only fails to terminate if \(M(S_{t+1}) = M(S_t)\) occurs infinitely often without reaching \(M=0\). If \(M(S_{t+1}) = M(S_t)\) but \(S_{t+1} \neq S_t\), the specific configuration of edges has changed. Since the state space is finite, the sequence of states \(S_0, S_1, S_2, \dots\) cannot visit distinct states indefinitely. If the algorithm does not terminate, it must eventually revisit a state, forming a cycle \(S_k, S_{k+1}, \dots, S_{k+L} = S_k\). But we established \(M(S_{t+1}) \leq M(S_t)\). For a cycle to exist, we must have \(M(S_k) = M(S_{k+1}) = \dots = M(S_{k+L})\). This requires that *no* edge is ever moved to \(E_{\text{Known}}\) (from \(E_{\text{Adjacent}}\) or \(E_{\text{Semi-directed}}\) or \(E_{\text{Unknown}}\)) and no edge is confirmed absent during this cycle, as these actions would strictly decrease \(M\). This means only transitions like \(E_{\text{Unknown}} \to E_{\text{Adjacent}}\) or \(E_{\text{Unknown}} \to E_{\text{Semi-directed}}\) could happen. However, these transitions reduce the size of \(E_{\text{Unknown}}\). A cycle would require \(E_{\text{Unknown}}\) to eventually increase again, which is impossible under the transition rules.
Therefore, every step that changes the state must eventually contribute to a strict decrease in \(M\) (or lead to termination). Since \(M\) is a non-negative integer, it must reach 0 in a finite number of steps.
\end{proof}

\begin{theorem}[Correctness]
\label{thm:correctness}
Under Assumptions A1-A6, if Algorithm \ref{alg:adaptiveip_meek} terminates, the final set \(E_{\text{Known}}\) corresponds exactly to the set of edges \(E^*\) in the true causal DAG \(G^*\), and \(E_{\text{Unknown}} = E_{\text{Semi-directed}} = E_{\text{Adjacent}} = \emptyset\).
\end{theorem}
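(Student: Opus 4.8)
The plan is to separate the claim into two parts. The assertion that $E_{\text{Unknown}} = E_{\text{Semi-directed}} = E_{\text{Adjacent}} = \emptyset$ is essentially immediate: the termination condition of the \textbf{REPEAT}--\textbf{UNTIL} loop in Algorithm \ref{alg:adaptiveip_meek} is precisely that every relationship has been placed into $E_{\text{Known}}$, which in the language of Theorem \ref{thm:convergence} means the ambiguity measure $M$ has reached $0$; by the definition of $M$ this forces the three uncertain sets to be empty at termination. The substantive content is therefore the identity $E_{\text{Known}} = E^*$, which I would establish through a \emph{soundness invariant} maintained across all iterations, rather than by reasoning only about the terminal state.

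Concretely, I would prove by induction on the iteration index $t$ that the \textsf{PKG} is always \emph{consistent} with $G^*$ in the following sense: (i) every ordered pair placed in $E_{\text{Known}}^{(t)}$ is a true directed edge of $G^*$ with the correct orientation; (ii) every pair that has been confirmed absent is genuinely non-adjacent in $G^*$; and (iii) each uncertain set admits the ground truth as a valid possibility---pairs in $E_{\text{Adjacent}}^{(t)}$ are truly adjacent with as-yet-undetermined orientation, and a pair recorded as $(j \to i)$ in $E_{\text{Semi-directed}}^{(t)}$ is either the true edge $j \to i$ or a true non-adjacency. The base case is the initial \textsf{PKG}, which we take to be consistent with $G^*$ (for instance the correct essential graph used in our experiments, whose oriented edges and adjacencies are correct by construction). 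The inductive step requires verifying that each transition rule of Table \ref{tab:transition_crosstab}, and each application of Meek's rules, maps a consistent \textsf{PKG} to a consistent \textsf{PKG}.

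The crux---and the step I expect to be the main obstacle---is showing that the orientation and adjacency tests yield the correct local structure, i.e.\ that the outcome read off from the interventional distribution $P_{\text{do}(I)}$ matches the ground truth. For an orientation test with $O_{ij}=1$ (so $i$ is intervened on and $j$ is not), Assumption A3 removes every edge incoming to $i$ in $G^*_{\text{do}(I)}$; hence if the true edge is $i \to j$ it survives and, by faithfulness (A2), $i$ and $j$ remain dependent under conditioning that blocks all indirect directed paths, so the ``presence'' branch correctly orients $i \to j$. If instead $j \to i$ holds (or no edge exists), that edge is severed, and acyclicity (A5) together with causal sufficiency (A1) rules out any surviving direct dependence of the intervened $i$ on $j$; the oracle (A4) then reports ``absence,'' which the rule records as the semi-directed $j \to i$, still consistent with the truth. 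The care needed here is precisely in conditioning so that an \emph{indirect} path $i \to k \to j$ is not mistaken for a direct edge, and in invoking acyclicity to exclude a spurious directed path that would otherwise survive the cut; the perfect oracle A4 guarantees these conditional independence queries are answered exactly. Adjacency tests ($A_{ij}=1$) are handled analogously, with faithfulness and the oracle certifying that non-separability is equivalent to true adjacency. For Meek's rules I would invoke their classical soundness \cite{Meek1995}: applied to orientations and adjacencies that are already correct by the invariant, they orient only edges whose direction is forced in \emph{every} DAG consistent with the current information---in particular in $G^*$---so no incorrect orientation is ever introduced.

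Finally I would assemble the conclusion. At termination the invariant guarantees $E_{\text{Known}} \subseteq E^*$ and that every confirmed-absent pair is a true non-edge, while the emptiness of the three uncertain sets guarantees that \emph{every} ordered pair has been resolved as either a known edge or a confirmed absence. Since no true edge can have been classified absent (soundness of the absence branches) and no non-edge can appear in $E_{\text{Known}}$ (part (i) of the invariant), the resolved classification is both exhaustive and correct, giving $E_{\text{Known}} = E^*$ exactly, which completes the proof.
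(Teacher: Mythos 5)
Your proof is correct and rests on the same core idea as the paper's --- soundness of every state update (transition rules under the perfect oracle and perfect interventions, plus the classical soundness of Meek's rules) combined with the termination condition --- but you organize it more tightly and in one respect improve on it. You make the soundness argument an explicit induction on an invariant (known edges and confirmed absences are correct; the uncertain sets still admit $G^*$), whereas the paper asserts soundness of each step and then combines the pieces informally. More significantly, the paper's proof invokes a fourth ingredient, an ``implied sufficiency'' assumption that interventions plus Meek's rules can fully identify $G^*$; your argument shows this is unnecessary for the conditional claim as stated, since the \textbf{UNTIL} condition already forces every pair into $E_{\text{Known}}$ or confirmed absence, and your invariant then yields both inclusions $E_{\text{Known}} \subseteq E^*$ and $E^* \subseteq E_{\text{Known}}$ directly (sufficiency only matters for whether termination occurs, which is Theorem~\ref{thm:convergence}'s business). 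One caveat applies equally to both proofs: the claim that an orientation test's ``presence'' outcome certifies the \emph{direct} edge $i \to j$ rather than an indirect directed path $i \to k \to j$ is only sketched --- you flag the needed conditioning explicitly, the paper hides it in the phrase ``if the test context ensures this path must be the direct edge'' --- so neither version fully discharges that step, but yours at least names the obstacle.
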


\begin{proof}
The proof relies on the soundness of each step and the termination condition.
\begin{enumerate}
    \item \textbf{Soundness of Information Gathering:}
        \begin{itemize}
            \item Assumption A4 (Perfect Oracle) guarantees that all CI tests performed yield correct results reflecting the (possibly intervened) distribution.
            \item Assumption A3 (Perfect Interventions) ensures the interventions correctly modify the system according to the causal semantics (removing parent influences).
        \end{itemize}
    \item \textbf{Soundness of State Updates:}
        \begin{itemize}
            \item \textbf{Transition Rules:} Each rule in Table \ref{tab:transition_crosstab} translates a specific CI test outcome under a specific intervention context into a conclusion about edge status. These translations are based on established principles of causal inference under Assumptions A1-A5. For example, intervening on \(i\) and observing \(j\): if \(j\)'s distribution changes (\(i \not\perp j \mid \text{do}(i)\)), faithfulness implies a directed path \(i \to \dots \to j\). If the test context ensures this path must be the direct edge \(i \to j\), the rule correctly moves it to \(E_{\text{Known}}\). If the test outcome implies \(i \not\to j\), the rule correctly updates the status (e.g., to \(E_{\text{Semi-directed}}\) for \(j \to i\) or absence). Similarly, adjacency tests \(A_{ij}\) correctly determine presence/absence of an edge when neither node is intervened. Since the oracle is perfect, these conclusions are correct relative to \(G^*\).
            \item \textbf{Meek's Rules:} Meek's rules are provably sound for orienting edges in a partially directed graph representing a Markov equivalence class \cite{Meek1995}. They only add orientations that are common to all DAGs in the current equivalence class and do not introduce cycles or new v-structures inconsistent with that class. When applied iteratively after valid updates from interventions, they continue to add only correct orientations to \(E_{\text{Known}}\).
        \end{itemize}
        Therefore, at any iteration \(t\), if an edge \((i \to j)\) is in \(E_{\text{Known}}^{(t)}\), then \(i \to j\) is a true edge in \(G^*\). Similarly, if a pair \(\{i, j\}\) is confirmed absent, there is no edge between \(i\) and \(j\) in \(G^*\). The sets \(E_{\text{Adjacent}}, E_{\text{Semi-directed}}, E_{\text{Unknown}}\) correctly represent the remaining ambiguity consistent with \(G^*\) and the information gathered so far.
    \item \textbf{Termination Condition:} The algorithm terminates when \(M(S_t) = 0\), which means \(E_{\text{Unknown}} = E_{\text{Semi-directed}} = E_{\text{Adjacent}} = \emptyset\). At this point, every pair \((i, j)\) has either been placed in \(E_{\text{Known}}\) or determined to be absent.
    \item \textbf{Sufficiency (Implied): } The standard framework assumes that interventions (specifically single- and multi-node perfect interventions) combined with observational data (and thus Meek's rules) are sufficient to distinguish the true DAG \(G^*\) from all other DAGs, eventually resolving all ambiguities \cite{HauserBuhlmann2012, Eberhardt2008}. Algorithm 1 employs these tools. Since the algorithm only terminates when no further progress can be made using these tools (as selected by the IP and applied by Meek's rules), and we assume these tools are sufficient for full identification, termination implies full identification.
\end{enumerate}
Combining soundness (only correct edges are added to \(E_{\text{Known}}\)) with the termination condition (all ambiguities are resolved) and the implied sufficiency of the methods, the final set \(E_{\text{Known}}\) must contain precisely the edges \(E^*\) of the true causal DAG \(G^*\).
\end{proof}

\begin{proposition}[Single-Step Optimality]
\label{prop:optimality}
At each iteration \(t\), the intervention set \(I^*\) selected by solving the \textsc{Adaptive\_IP} model maximizes the objective function (Eq. \ref{o.f.}), which represents the total number of edges in \(E_{\text{Unknown}}^{(t)} \cup E_{\text{Semi-directed}}^{(t)} \cup E_{\text{Adjacent}}^{(t)}\) that are subjected to a potentially resolving test (\(O_{ij}\) or \(A_{ij}\)) in iteration \(t\), subject to the budget constraint \(B\) and the intervention size limit \(k_{max}\).
\end{proposition}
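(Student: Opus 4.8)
The plan is to treat this proposition as a statement about the correctness of the IP encoding together with the exactness of the solver, so I would split the argument into two parts: (i) for any \emph{fixed} feasible intervention vector $X$, the optimal value of the objective (Eq.~\ref{o.f.}) over the remaining auxiliary variables equals a purely combinatorial count $c(X)$ of edges subjected to a potentially resolving test; and (ii) since \textsc{Adaptive\_IP} is solved to global optimality, the returned $X^*$ (hence $I^* = \{i : X_i^* = 1\}$) maximizes $c(X)$ over all feasible $X$.

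For part (i), the key structural observation is that the auxiliary variables form a layered dependency. The test indicators $O_{ij}, A_{ij}$ appear \emph{only} on the right-hand sides of the update constraints (Eqs.~\ref{IDU}--\ref{IDA}) and never in the objective, while the update indicators $IDU_{ij}, IDS_{ij}, IDA_{ij}$ appear in the objective with unit positive coefficients and are each bounded above by a sum of the relevant $O/A$ variables. Because every one of these variables occurs only in $\le$ upper-bound constraints and the objective is nondecreasing in each, there is an optimal solution in which all of them are pushed to their largest feasible values. I would then evaluate those values explicitly for a fixed $X$: constraints (Eqs.~\ref{o1}--\ref{o2}) give $O_{ij} = 1 \iff X_i = 1 \wedge X_j = 0$, constraints (Eqs.~\ref{a1}--\ref{a2}) give $A_{ij} = 1 \iff X_i = 0 \wedge X_j = 0$, and the binary update variables then equal $1$ exactly when at least one of their bounding $O/A$ terms equals $1$. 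This is precisely the definition of an edge being subjected to a potentially resolving orientation or adjacency test, so the optimal objective value for a fixed $X$ equals $c(X)$.

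For part (ii), I would note that the feasible set of intervention vectors is exactly those $X$ with support in $\mathcal{X}$ satisfying the size limit (Eq.~\ref{kmax}) and, when included, the budget (Eq.~\ref{budget}). Combining with part (i), the full IP maximizes $c(X)$ over this feasible set. Since the model is a bounded integer program with a finite feasible region solved exactly (e.g., by Gurobi), the solver returns a global optimum $X^*$, and thus $I^*$ attains $c(I^*) = \max_{X \text{ feasible}} c(X)$, which is the claimed single-step optimality.

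I expect the only genuine obstacle to be the monotonicity, or ``push to the upper bound,'' step in part (i): because $O_{ij}$ and $A_{ij}$ do not themselves appear in the objective, one must argue carefully that raising them never decreases the attainable objective, so that without loss of generality they may be fixed to the combinatorial values dictated by $X$. This is a routine argument, since the constraint matrix couples these variables only through one-directional $\le$ inequalities, but it is the step that actually certifies that the objective \emph{counts} resolving tests rather than merely bounding them.
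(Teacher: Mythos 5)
Your proof is correct and follows essentially the same route as the paper's, which likewise argues that the constraint chain $X_i \to O_{ij}, A_{ij} \to IDU_{ij}, IDS_{ij}, IDA_{ij}$ makes the objective count testable edges and then invokes exactness of the integer programming solver over the feasible set defined by (Eq.~\ref{budget}) and (Eq.~\ref{kmax}). The only difference is that you make explicit the ``push the auxiliary variables to their upper bounds'' step certifying that, for each fixed feasible $X$, the optimal objective value \emph{equals} (rather than merely bounds) the number of edges subjected to a potentially resolving test --- a detail the paper's one-paragraph proof leaves implicit.
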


\begin{proof}
This follows directly from the definition of the \textsc{Adaptive\_IP} model. The objective function (Eq. \ref{o.f.}) is constructed by summing the indicator variables \(IDU_{ij}, IDS_{ij}, IDA_{ij}\). Constraints (\ref{IDU})-(\ref{IDA}) link these indicators to the activation of test variables \(O_{ij}, O_{ji}, A_{ij}\). Constraints (\ref{o1})-(\ref{a2}) link the test variables to the selection of interventions \(X_i\). The integer programming solver, by definition, finds a feasible assignment to the variables \(X_i\) (respecting constraints \ref{budget} and \ref{kmax}) that maximizes the objective function value. Therefore, the selected intervention \(I^* = \{i \mid X_i = 1\}\) maximizes the number of edges targeted for testing in that step under the given constraints.
\end{proof}

These theoretical results confirm that, under ideal conditions, the proposed adaptive strategy is guaranteed to correctly identify the true causal structure in a finite number of steps, while behaving optimally in a greedy, single-step sense according to the chosen objective function.

\clearpage

\section{IP Model Extensions}
\label{model_extensions}






\subsection{Non-linear Cost Structures}

The basic \textsc{Adaptive\_IP} model assumes that the cost of intervening on a set of variables is the sum of the individual intervention costs. However, in many real-world scenarios, the cost structure might be non-linear. This implies that the cost of intervening on a combination of variables is not simply additive but involves interaction effects.

For instance, consider a scenario involving two variables, $V_1$ and $V_2$, with individual intervention costs $CI_1 = 1$ and $CI_2 = 1$. Intervening simultaneously on both variables might yield a different cost, such as $CI_{1,2} = 10$. This demonstrates a superadditive interaction, where the joint intervention cost exceeds the sum of individual costs.

To model these interactions, we introduce auxiliary binary variables and additional constraints.

\textbf{Example (Two Variables):}

Let $Y_{12}$ be a binary variable indicating whether both $V_1$ and $V_2$ are intervened on simultaneously ($Y_{12} = 1$) or not ($Y_{12} = 0$). Define a "delta cost," $C_{12}$, representing the extra cost incurred when both variables are jointly intervened on beyond their individual sums. In this example, $C_{12} = CI_{1,2} - CI_1 - CI_2 = 10 - 1 - 1 = 8$.

The budget constraint becomes:
\begin{align}
    CI_1 \cdot X_1 + CI_2 \cdot X_2 + C_{12} \cdot Y_{12} &\leq B \\
    Y_{12} &\leq X_1 \\
    Y_{12} &\leq X_2 \\
    Y_{12} &\geq X_1 + X_2 - 1
\end{align}

These constraints ensure $Y_{12} = 1$ if and only if both $X_1 = 1$ and $X_2 = 1$.

\paragraph{Generalization to Multiple Variables:}
This formulation can extend to interactions among any number of variables. For a subset of variables $S = \{i_1, \dots, i_k\}$ with non-linear costs, we introduce a binary interaction variable $Y_S$ and define the corresponding additional joint cost $C_S$. The following constraints ensure correct activation of interaction variables:
\begin{align}
    Y_S &\leq X_{i_j}, \quad \forall j \\
    Y_S &\geq \sum_{j=1}^{k} X_{i_j} - (k - 1)
\end{align}

The budget constraint then incorporates these interactions:
\begin{align}
    \sum_{i} CI_i \cdot X_i + \sum_{S} C_S \cdot Y_S &\leq B
\end{align}

This generalized formulation accurately captures complex non-linear intervention costs involving multiple variables.

\paragraph{Physical Limitations and Prohibitive Costs:}
In certain scenarios, specific intervention combinations may be practically infeasible. Such cases can be modeled by setting the delta cost to a value that exceeds the budget, effectively prohibiting the combination. For example, setting $C_{12} = B + 1$ ensures that the budget constraint is violated whenever $Y_{12} = 1$.

\subsection{Alternative Objective Functions}
\label{sec:objs}
The default objective (Eq. \ref{o.f.}) maximizes the \textbf{number} of edges subjected to a potentially resolving test in the current step. While simple and intuitive, alternative objectives might be more appropriate depending on the specific goals or known structure.

\paragraph{1. Average Gain over MEC: }
A theoretically appealing objective is to maximize the \textbf{expected} number of newly oriented edges, where the expectation is taken over the set of possible true DAGs within the current MEC. Ghassami et al. \cite{ghassami2019interventional} define this as the average gain \(D(I)\):
\[ D(I) = \frac{1}{|\text{MEC}(G^*)|} \sum_{G_i \in \text{MEC}(G^*)} |R(I, G_i)| \]
where \(R(I, G_i)\) is the set of edges oriented by intervention set \(I\) if \(G_i\) were the true DAG. While this directly measures progress, calculating \(|\text{MEC}(G^*)|\) and the sum is computationally challenging, as the MEC size can be super-exponential in the number of variables. Ghassami et al. propose exact calculators and sampling-based estimators for \(D(I)\) in a non-adaptive setting. Adapting such objectives to our iterative IP framework would require efficient (possibly approximate) calculation or estimation of this expected gain at each step, posing a significant computational challenge compared to our current objective. Our immediate-test maximization objective can be seen as a computationally feasible proxy for this goal.

\paragraph{2. Worst-Case Gain (Minimax Objective):}
Another strategy, also considered by Ghassami et al. \cite{ghassami2019interventional} and Hauser and Bühlmann \cite{hauser2014two}, is to maximize the \textbf{minimum} gain achieved across all possible DAGs in the MEC:
\[ \text{maximize}_I \quad \min_{G_i \in \text{MEC}(G^*)} |R(I, G_i)| \]
This provides a robust guarantee, ensuring a certain number of orientations regardless of which DAG in the MEC is true. Optimizing this minimax objective typically requires different algorithmic approaches (e.g., specialized algorithms for trees \cite{ghassami2019interventional}, greedy methods based on minimax criteria \cite{hauser2014two}) and may not directly translate to a simple linear objective in our IP framework.

\paragraph{3. Weighted Objective (Prioritizing Informative Edges):}
Instead of treating all potential edge updates equally, we can assign weights \(w_{ij}\) to uncertain edges based on their perceived importance or potential impact on resolving the graph structure. The objective becomes:
\begin{equation}
\label{o.f.weighted}
\text{maximize} \quad \sum_{(i,j) \in E_{\text{Unknown}}} w_{ij}^U IDU_{ij} + \sum_{(i,j) \in E_{\text{Semi-directed}}} w_{ij}^S IDS_{ij} + \sum_{(i,j) \in E_{\text{Adjacent}}} w_{ij}^A IDA_{ij} 
\end{equation}
How weights are assigned is crucial and heuristic. Examples include:
\begin{itemize}
    \item \textbf{Connectivity-based:} Higher weight if \(i\) or \(j\) have high degrees in the known/adjacent parts of the graph.
    \item \textbf{Structure-based:} Higher weight if resolving \((i, j)\) could potentially resolve other edges via Meek's rules (e.g., if \(i-j\) is part of potential structures that fit Meek's rule patterns like R1 or R3). This might require estimating the downstream impact.
    \item \textbf{Uncertainty-based:} Higher weight for edges in \(E_{\text{Unknown}}\) compared to \(E_{\text{Adjacent}}\) or \(E_{\text{Semi-directed}}\), as they represent greater ambiguity.
\end{itemize}
This allows the strategy to focus interventions on parts of the graph deemed most critical or likely to yield cascading resolutions.

\paragraph{4. Targeted Discovery Objective:}
If the goal is not to identify the entire DAG, but rather specific causal features (e.g., parents of a target variable \(Y\), existence of a path \(X \to \dots \to Z\)), the objective can be modified to focus on edges relevant to that query. Let \(E_{\text{relevant}}\) be the set of pairs \((i, j)\) involved in the query (e.g., all pairs \((i, Y)\) when finding parents of \(Y\)). The objective could be restricted to:
\begin{equation}
\label{o.f.targeted}
\text{maximize} \quad \sum_{(i,j) \in E_{\text{relevant}} \cap E_{\text{Unknown}}} IDU_{ij} + \sum_{(i,j) \in E_{\text{relevant}} \cap E_{\text{Semi-directed}}} IDS_{ij} + \sum_{(i,j) \in E_{\text{relevant}} \cap E_{\text{Adjacent}}} IDA_{ij} 
\end{equation}
This directs experimental effort towards answering the specific causal question posed.

\paragraph{5. Cost-Effectiveness (Approximation):}
Maximizing information per unit cost (\(\text{Objective} / \text{Cost}\)) leads to a non-linear fractional objective, generally harder to solve directly with standard IP solvers. A simpler proxy is to incorporate cost into a penalized objective:
\begin{equation}
\label{o.f.cost_penalty}
\text{maximize} \quad (\sum w_{ij} \cdot \text{UpdateVar}_{ij}) - \lambda \cdot (\sum_{i} CI_i X_i + \sum_{S} C_S Y_S + \dots) 
\end{equation}
Here, \(\lambda \ge 0\) is a penalty parameter balancing knowledge gain (potentially weighted) against intervention cost. This encourages finding solutions that are not just informative but also relatively cheap, though it requires tuning \(\lambda\). Note that the budget constraint (Eq. \ref{budget}) still applies.

\subsection{Batch Interventions for Parallel Execution}

The standard algorithm selects one intervention set \(I^*\) per iteration. If resources allow running multiple experiments in parallel before the next analysis cycle, the IP model can be adapted to select a \textbf{batch} of \(k_{\text{batch}}\) disjoint intervention sets.

Let \(X_{i,b}\) be a binary variable indicating if variable \(i\) is intervened on in batch experiment \(b \in \{1, \dots, k_{\text{batch}}\}\).
Let \(O_{ij, b}, A_{ij, b}, IDU_{ij, b}, \dots\) be similarly indexed by batch \(b\).

The objective would typically aim to maximize the \textbf{total} knowledge gained across the batch:
\[
\text{maximize} \quad \sum_{b=1}^{k_{\text{batch}}} \left( \sum_{(i,j)} IDU_{ij, b} + \dots \right) 
\]
Constraints need modification:
\begin{itemize}
    \item \textbf{Budget per Experiment:} A budget \(B_b\) could apply to each experiment \(b\).
    \[ \sum_{i \in V} (CI_i X_{i,b} + CO_i (1-X_{i,b})) \leq B_b \quad \forall b \]
    \item \textbf{Intervention Limit per Experiment:}
    \[ \sum_{i \in \mathcal{X}} X_{i,b} \leq k_{\text{max}} \quad \forall b \]
    \item \textbf{(Optional) Total Budget/Resource Limit:} A constraint across all batches might limit total cost or total number of interventions.
    \[ \sum_{b=1}^{k_{\text{batch}}} \sum_{i \in V} CI_i X_{i,b} \leq B_{\text{total}} \]
    \item \textbf{Test Activation:} Constraints (\ref{o1})-(\ref{a2}) and (\ref{IDU})-(\ref{IDA}) are replicated for each batch \(b\).
    \[ O_{ij,b} \leq X_{i,b}, \quad O_{ij,b} \leq 1 - X_{j,b}, \quad \dots \quad \forall b \]
    \[ IDU_{ij, b} \leq O_{ij, b} + O_{ji, b} + A_{ij, b}, \quad \dots \quad \forall b \]
\end{itemize}
This extension allows leveraging parallelism but increases the size and complexity of the IP model significantly. The assumption is that results from the batch are analyzed together before the next iteration.

\subsection{Dynamic Constraints}

The iterative nature of Algorithm \ref{alg:adaptiveip_meek} naturally accommodates dynamic constraints. The budget \(B\) or the maximum number of simultaneous interventions \(k_{max}\) need not be fixed throughout the discovery process. They can be updated at the beginning of each iteration before solving the \textsc{Adaptive\_IP} model, reflecting changing resource availability or experimental feasibility over time.

\end{document}